\documentclass[11pt,a4paper]{article}
\usepackage[utf8]{inputenc}
\usepackage[T1]{fontenc}
\usepackage{amsmath, amssymb, amsthm}
\usepackage{hyperref}
\usepackage{booktabs}
\usepackage{algorithm}
\usepackage{algpseudocode}
\usepackage{graphicx}
\usepackage[a4paper,hmargin=2.5cm,vmargin=2.5cm,verbose]{geometry}
\graphicspath{{../figures/}}

\newtheorem{theorem}{Theorem}
\newtheorem{definition}{Definition}
\newtheorem{proposition}{Proposition}
\newtheorem{lemma}{Lemma}
\newtheorem{corollary}{Corollary}
\newtheorem{remark}{Remark}

\newcommand{\G}{\mathcal{G}}
\newcommand{\Gp}{\mathcal{G}'}
\newcommand{\Hb}{\mathcal{H}}
\newcommand{\V}{\mathcal{V}}
\newcommand{\E}{\mathcal{E}}

\newcommand{\Surg}{\mathcal{S}}

\begin{document}

\title{Cost Accounting for Reactive Computational Graphs: Exhaustive Sweeps, Sequential Mutation, and the Backward-Locality Gap}
\author{Abdallah Khemais \\ \textit{ISITCOM, University of Sousse}}
\date{July 2026}

\maketitle

\begin{abstract}
Exhaustive site-by-site interventions on a neural network's computational graph --- activation-patching sweeps, circuit-discovery searches, systematic ablation studies --- mutate the graph at every candidate site in turn, and their cost is dominated by recomputation after each mutation. On a reactive graph engine whose invalidation provably touches exactly the downstream cone of a mutated node, we give a complete cost accounting for such workloads. First, the aggregate speedup of an exhaustive sweep over independent full recomputations is not a universal constant: if per-layer computational weight varies regularly with depth with Karamata index $q$, the ratio converges to $(q+2)/(q+1)$ when weight is concentrated near the output and to $q+2$ when concentrated near the input, recovering $2$ only in the depth-uniform case; a wall-clock corollary computed from measured interpreter constants predicts a ceiling of $\approx 1.79$, strictly below $2$, whenever per-site overhead is a material fraction of per-node cost --- a ceiling we then measure to be already non-binding at transformer scale, with no compilation layer involved. Second, we prove the exact cost of a \emph{sequence} of persistent mutations, never undone between insertions: the interleaved cost exceeds the sum of isolated costs by an exact overcount $\Delta(\pi) \ge 0$ summed over comparable site pairs, with closed-form extremes over insertion orders, while \emph{batched} application is order-independent and sub-additive, costing exactly the union of the sites' downstream cones plus the fresh nodes. Third, we prove the exact mirror of forward locality for the backward pass and show it forces the aggregate sweep speedup to collapse to $1$ under backpropagation on architectures without long skip connections --- delimiting precisely the regime (inference-time sweeps) where the speedup applies undiminished. Every identity is validated on the reference implementation in \emph{NeuroDSL} \cite{neurodsl}, a reactive define-and-run graph engine in Julia: measured sweep ratios on real graphs converge to the predicted limits under four non-uniform cost profiles (E4); the training-mode ratio collapses to $1$ at the predicted rate (E5); all $18$ per-graft sequential costs and the batched total match the closed forms at zero tolerance across three insertion orders (E7); and on two GPU transformer sweeps a closed-form cone size reproduces all $564$ measured cones with zero residual, the wall-clock ratios landing within $1.3\%$ of the overhead-free prediction and on opposite sides of it (E9).
\end{abstract}

\section{Introduction}

The workhorse of mechanistic interpretability is the \emph{sweep}: patch or ablate each candidate site of a trained network in turn, measure the effect on a metric, restore the baseline, and move to the next site. Circuit-discovery searches \cite{conmy2023}, causal-tracing protocols \cite{meng2022}, and systematic robustness checks all share this shape, and their cost is dominated not by the mutation itself --- overwriting one activation or one rule is cheap --- but by the \emph{recomputation} each mutation forces before the metric can be read, multiplied by the number of candidate sites, which grows with model size.

How much of that recomputation is actually necessary depends on the execution model. Eager frameworks such as PyTorch \cite{pytorch} re-run the full forward pass per intervention; compiled pipelines (\texttt{torch.compile}, \texttt{jax.jit} \cite{jax}) may additionally re-trace when the intervention changes the traced program. Neither maintains a notion of \emph{partial validity}: the framework cannot distinguish the subgraph whose cached values are still correct from the subgraph invalidated by the intervention. A \emph{reactive} graph engine can. \emph{NeuroDSL} \cite{neurodsl} keeps the computational graph as a persistent DAG in which nodes own their cached values and a mutation triggers an invalidation wave provably confined to the mutated node's downstream cone --- a single-mutation locality theorem restated as Theorem~\ref{thm:locality} below, and proved in a companion study of exact network surgery on the same engine \cite{surgery}.

This paper asks the question that theorem leaves open: what does an entire \emph{workload} of mutations cost? We answer it exactly, in three regimes.

\paragraph{Contributions.}
\begin{enumerate}
    \item \textbf{Aggregate sweep cost.} For an exhaustive sweep that patches every site and restores each before the next, the speedup over independent full recomputations converges to $(q+2)/(q+1)$ or $q+2$ in the Karamata index $q$ of the network's cost-by-depth profile --- $2$ only in the depth-uniform case (Theorem~\ref{thm:aggregate}). A wall-clock corollary, computed from interpreter constants measured on the reference engine, predicts a ceiling of $\approx 1.79$, strictly below the combinatorial limit, in the regime where per-site overhead is a material fraction of per-node cost (Corollary~\ref{cor:wallclock}); at transformer scale on GPU we measure that ceiling to be already non-binding, the wall-clock ratio tracking the combinatorial one to within $1.3\%$ without any compilation layer (Section~\ref{sec:e9}).
    \item \textbf{Sequential and batched mutation cost.} For a sequence of \emph{persistent} grafts (never undone, as a growth schedule or a cumulative multi-site intervention performs), the interleaved cost is exactly the isolated sum plus an overcount $\Delta(\pi) \ge 0$ summed over comparable site pairs, with closed-form extremes over insertion orders (Theorem~\ref{thm:sequential}, Corollary~\ref{cor:order-extremes}); batched application is order-independent and sub-additive, costing exactly the union of the cones plus the fresh nodes (Proposition~\ref{prop:batched}).
    \item \textbf{The backward-locality gap.} The exact mirror of the locality theorem holds for the backward pass (Theorem~\ref{thm:backward-locality}), and it forces the aggregate sweep speedup to collapse to $1$ under backpropagation on architectures without long skip connections (Corollary~\ref{cor:backward-collapse}) --- a boundary of the aggregate result, not a retraction of it: inference-time sweeps, the case that motivates this paper, are unaffected.
    \item \textbf{Empirical validation at zero tolerance where the claims are exact} (Section~\ref{sec:experiments}): measured sweep ratios on real reactive graphs converge to the predicted limits under four cost profiles and both orientations (E4); the training-mode ratio collapses to $1$ on a graph with genuine intra-layer width (E5); and all $18$ per-graft sequential costs, both order extremes, and the order-independent batched total match the closed forms exactly (E7), alongside a measured negative result on bookkeeping drift with mutation count.
\end{enumerate}

\section{Related Work}
\label{sec:related}

\paragraph{Dynamic graph engines.} PyTorch \cite{pytorch} is define-by-run; JAX \cite{jax} traces pure functions. Neither maintains a persistent reactive graph across steps: validity of cached computation is not a first-class notion, so an intervention pays a full forward pass (or a re-trace) regardless of how little of the graph it actually affects. NeuroDSL's design \cite{neurodsl} is closer to incremental computation systems, applied to differentiable programs; this paper quantifies exactly what that buys, and does not buy, for sweep-shaped workloads.

\paragraph{Interventional sweeps.} Causal mediation analysis \cite{vig2020} and causal tracing \cite{meng2022} localize a behaviour by overwriting an activation and reading the change in a metric; automated circuit discovery \cite{conmy2023} turns that loop into an exhaustive search over candidate edges. All of them iterate patch--measure--restore over candidate sites, and the standard tooling \cite{transformerlens} recomputes a full forward pass per intervention, so the published cost analyses count forward passes: an exhaustive sweep is described as scaling linearly in the number of components, each paying a full recomputation \cite{atpstar}. The accounting here replaces that assumption with exact combinatorial identities for engines that recompute only what a mutation invalidates.

\paragraph{Gradient-based approximations, and the division of labour with them.} The dominant response to sweep cost in this literature is to \emph{approximate}. AtP and AtP* \cite{atpstar} and edge attribution patching \cite{eap} replace the per-site forward pass with a first-order gradient estimate, scoring every site from two forward passes and one backward pass --- a far larger cost reduction than the factor derived here, bought by surrendering exactness, and with false-negative modes that AtP* is explicitly designed to characterize and bound \cite{atpstar}. The two lines are complementary rather than competing, and the boundary between them is sharper than it first looks: those methods need a backward pass, and Corollary~\ref{cor:backward-collapse} shows that the backward pass is exactly the regime in which the reactive advantage collapses to $1$. What the accounting below governs is therefore the exact, inference-time sweep --- the regime automated circuit discovery \cite{conmy2023} occupies, and the one an approximate ranking must eventually be verified in.

\paragraph{Exact surgery.} The companion study of exact network surgery on the same engine \cite{surgery} proves the single-mutation locality theorem (restated as Theorem~\ref{thm:locality}) together with functional-exactness guarantees for grafted residual blocks; the present paper takes the single-mutation statement as its starting point and derives the workload-level asymptotics.

\section{Preliminaries}
\label{sec:prelim}

\begin{definition}[Computational graph]
A computational graph is a tuple $\G = (\V, \E, \mathrm{op}, \theta)$ where $(\V, \E)$ is a finite DAG, $\mathrm{op}(v)$ assigns to each non-input node an operator, and $\theta(v)$ its (possibly empty) parameter set. Distinguished subsets $\V_{\mathrm{in}} \subset \V$ (sources) and a node $v_{\mathrm{out}}$ (output) induce, by composition along the topological order, a function $F_{\G} : \mathcal{X} \to \mathcal{Y}$.
\end{definition}

\begin{definition}[Grafting]
Let $e = (u, w) \in \E$ be an edge of $\G$ and $\Hb$ a computational graph with a single input and a single output, realizing a function $F_{\Hb}$. The grafting $\Surg(\G, e, \Hb)$ produces $\Gp$ by removing $e$ and adding edges $(u, \mathrm{in}(\Hb))$ and $(\mathrm{out}(\Hb), w)$, i.e., the value flowing along $e$ now passes through $\Hb$.
\end{definition}

Defining insertion on an \emph{edge} (rather than "at a node") removes any ambiguity about which consumers of $u$ are rerouted: exactly those along $e$. Two mutation primitives cover every workload in this paper: a \emph{graft} adds $h = |\V(\Hb)|$ fresh computable nodes at a site, and a \emph{patch} redefines the rule of an existing node in place (e.g., replacing an activation by a stored or corrupted value) and is later \emph{restored} by redefining the rule back --- two mutations at the same site, each triggering the same invalidation.

In a reactive engine, every node carries a validity flag; a mutation at a node $s$ invalidates its dependents, and the next demand-driven evaluation (\texttt{demand!}) recomputes exactly the invalid region. We formalize the invalidation procedure as Algorithm~\ref{alg:invalidate} and restate the locality theorem this paper's accounting is built on; the NeuroDSL routine \texttt{\_invalidate\_downstream!} implements this algorithm (implementation conformance is an engineering claim, verified by the test suite, not part of the proof).

\begin{algorithm}
\caption{\textsc{Invalidate}$(\G, s)$ --- reactive invalidation from a seed node $s$}
\label{alg:invalidate}
\begin{algorithmic}[1]
\State $Q \gets [s]$;\quad $\mathrm{seen} \gets \{s\}$
\While{$Q$ not empty}
    \State $v \gets \mathrm{pop}(Q)$
    \For{each $w$ with $(v, w) \in \E$}
        \If{$w \notin \mathrm{seen}$}
            \State $\mathrm{valid}(w) \gets \mathbf{false}$;\quad $\mathrm{seen} \gets \mathrm{seen} \cup \{w\}$;\quad $\mathrm{push}(Q, w)$
        \EndIf
    \EndFor
\EndWhile
\end{algorithmic}
\end{algorithm}

\begin{definition}[Downstream cone]
For $s \in \V$, let
\[
\V_s^{+} = \{\, v \in \V \mid \text{there is a path of length} \geq 1 \text{ from } s \text{ to } v \,\}.
\]
By convention $s \notin \V_s^{+}$ unless $s$ lies on a cycle, which the DAG property excludes; when the mutation is a graft, the seed $s = \mathrm{out}(\Hb)$ is a freshly created node, which is born unvalued and needs no invalidation.
\end{definition}

\begin{theorem}[Structural locality]
\label{thm:locality}
On a DAG, Algorithm~\ref{alg:invalidate} terminates and the set of nodes it marks invalid is exactly $\V_s^{+}$. Its complexity is $O(|\V_s^{+}| + |\E_s^{+}|)$, where $\E_s^{+}$ is the set of edges internal to the cone.
\end{theorem}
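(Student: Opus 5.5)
The plan is to treat Algorithm~\ref{alg:invalidate} as a breadth-first (or any-order) graph search from $s$ and prove three things in turn: termination, the exact identity $\{\text{marked}\} = \V_s^{+}$ by two inclusions, and the cost bound by charging each unit of work to a cone node or cone edge. Throughout, ``marked'' means ``assigned $\mathrm{valid}(w) \gets \mathbf{false}$ at line 6''. These are exactly the nodes added to $\mathrm{seen}$ other than $s$ itself.

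\textbf{Termination.} I will use the invariant that a node is pushed onto $Q$ only at the moment it enters $\mathrm{seen}$, and that $\mathrm{seen}$ never shrinks. Hence each node is pushed at most once, so there are at most $|\V|$ iterations of the outer loop. Each iteration scans a finite out-neighbourhood, so the algorithm halts. Finiteness of $\V$ suffices here; acyclicity is not yet needed.

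\textbf{Soundness (marked $\subseteq \V_s^{+}$).} I will prove by induction on push order that every node in $Q$ is reachable from $s$ by a path of length $\ge 0$. The base case is $s$ itself. For the inductive step, a node $w$ pushed while processing $v$ gets $v$'s path extended by the edge $(v,w)$. So every marked $w$ has a path of length $\ge 1$ from $s$.

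\textbf{Completeness ($\V_s^{+} \subseteq$ marked).} I will induct on the length $k\ge1$ of a shortest path $s=v_0\to\cdots\to v_k=w$. The predecessor $v_{k-1}$ either is $s$ (popped first) or is in $\V_s^{+}$ with a shorter path, so by induction it is marked and hence pushed. Every pushed node is eventually popped, because the loop runs until $Q$ is empty. When $v_{k-1}$ is popped, the edge $(v_{k-1},w)$ is scanned, so $w$ is either newly marked then or already in $\mathrm{seen}$.

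This completeness step is where I expect the main subtlety. Being in $\mathrm{seen}$ means either marked or equal to $s$. The case $w=s$ would require a path of length $\ge1$ from $s$ to $s$, i.e.\ a cycle, which the DAG hypothesis excludes. So the DAG property is used exactly here, to rule out that the initial seeding of $s$ into $\mathrm{seen}$ hides a cone node. This matches the convention in the definition of $\V_s^{+}$.

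\textbf{Complexity.} Each popped node is $s$ or a node of $\V_s^{+}$, and the pop plus the set and flag operations cost $O(1)$ with hashed or array-indexed sets. The inner loop scans each out-edge of each popped node exactly once, for $O(1)$ per edge. The out-edges of cone nodes have both endpoints in $\{s\}\cup\V_s^{+}$, since their heads are reachable from $s$. The out-edges of $s$ number at most $|\V_s^{+}|$, since their heads are distinct cone nodes. The total is therefore $O(1+|\V_s^{+}|+|\E_s^{+}|)$. I will read $\E_s^{+}$ as the edges of the subgraph induced on $\{s\}\cup\V_s^{+}$; otherwise the edges out of $s$ are absorbed by the $|\V_s^{+}|$ term, as just shown. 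The additive $1$ is absorbed whenever the cone is nonempty, and the empty-cone case is a trivial $O(1)$ that I will note separately.
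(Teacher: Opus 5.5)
Your proof is correct and follows the paper's own route: termination from the at-most-once push invariant, soundness by induction on marking order, completeness by induction along a witnessing path from $s$, and cost by charging pops and edge scans to the cone. You are slightly more careful than the paper in two places: you make explicit that acyclicity is what stops a cone node from being masked by the initial seeding of $s$ into $\mathrm{seen}$ (the paper's ``already in seen, hence marked'' relies on this silently), and you account for the pop of $s$ and its out-edges, which the paper's ``each node in the cone is popped once and each internal edge scanned once'' leaves out.
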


\begin{proof}
\emph{Termination.} Each node enters $\mathrm{seen}$ at most once and is pushed at most once; $\V$ is finite.

\emph{Soundness} (marked $\Rightarrow$ in cone). We show by induction on the order of marking that every node marked invalid lies in $\V_s^{+}$. The first nodes marked are the immediate successors of $s$, which are in $\V_s^{+}$ via paths of length $1$. Inductively, a node $w$ is marked only when popped from the queue along an edge $(v, w)$ with $v$ previously marked or $v = s$; by the induction hypothesis there is a path $s \rightsquigarrow v$ (possibly empty if $v = s$), which extended by $(v,w)$ gives a path of length $\geq 1$ from $s$ to $w$. Hence $w \in \V_s^{+}$.

\emph{Completeness} (in cone $\Rightarrow$ marked). Let $w \in \V_s^{+}$ and let $s = v_0 \to v_1 \to \dots \to v_k = w$, $k \geq 1$, be a witnessing path. By induction on $i$: $v_0 = s \in \mathrm{seen}$ and is processed; if $v_i$ is processed, then when it is popped, its successor $v_{i+1}$ is either already in $\mathrm{seen}$ (hence was marked and pushed earlier) or is marked and pushed now. Either way $v_{i+1}$ is marked and eventually processed. Thus $v_k = w$ is marked.

\emph{Preservation.} A node $v \notin \V_s^{+}$ is never marked by soundness, so $\mathrm{valid}(v)$ is untouched: its cached value survives the mutation.

\emph{Complexity.} Each node in the cone is popped once and each internal edge scanned once.
\end{proof}

Theorem~\ref{thm:locality} bounds the cost of a \emph{single} mutation by its downstream cone. Everything that follows is the accounting for workloads built out of many such mutations.

\section{Aggregate Cost of an Exhaustive Sweep}
\label{sec:aggregate}

A caller that tests every candidate site exhaustively -- an ablation sweep, a circuit-discovery
search, or a systematic robustness check -- pays the sum of the per-site cone costs across
every site, and it is natural to ask how this aggregate compares to the cost of testing each site
by an independent full recomputation. The answer depends on exactly one number: how
computational weight is distributed across depth.

\begin{definition}[Layered cost model]
\label{def:layered}
A \emph{layered DAG} of depth $L$ assigns to each layer $j \in \{1,\dots,L\}$ a computational
weight $w_j > 0$ (e.g.\ its node count) and hosts $S$ candidate sites per layer, each site's
downstream cone coinciding, up to an $O(1)$ per-site remainder that vanishes in the ratios below,
with the suffix $\sum_{j>i} w_j$ of the layer $i$ it belongs to. Write $N(L) = \sum_{j=1}^{L} w_j$
for the total weight and
\[
\rho(L) \;=\; \frac{S L \cdot N(L)}{S \sum_{i=1}^{L} \sum_{j>i} w_j}
\]
for the ratio of $SL$ independent full recomputations to the aggregate cost of exhaustively
patching every site, each restored via Theorem~\ref{thm:locality}'s exact cone.
\end{definition}

\begin{theorem}[Aggregate sweep ratio]
\label{thm:aggregate}
Suppose $w_j \sim \ell(j)\, j^{q}$ as $j \to \infty$ for some $q \ge 0$ and some $\ell$ slowly
varying at infinity (Karamata: $\ell(cx)/\ell(x) \to 1$ for every $c>0$). Then
\[
\rho(L) \;\longrightarrow\; \frac{q+2}{q+1} \qquad (L \to \infty).
\]
If instead the weight profile is reversed, $w_j \sim \ell(L{+}1{-}j)\,(L{+}1{-}j)^{q}$ (layers
near the \emph{input} carry the heavier weight), the limit is
\[
\rho(L) \;\longrightarrow\; q+2.
\]
In particular $q=0$ (uniform per-layer weight, $w_j \equiv M$) gives $\rho(L) \to 2$ under either
profile.
\end{theorem}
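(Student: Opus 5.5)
Swapping the order of summation in the denominator of $\rho(L)$ gives
\[
\sum_{i=1}^{L}\sum_{j>i} w_j \;=\; \sum_{j=1}^{L} (j-1)\, w_j ,
\]
so that $\rho(L) = L\,N(L) \big/ \sum_{j}(j-1)w_j$. The factor $S$ cancels. The $O(1)$ per-site remainders of Definition~\ref{def:layered} add at most $O(SL)$ to the denominator, whereas the denominator itself is at least of order $S L^{2}$ (the weights stay bounded below whenever $q\ge 0$ and $\ell$ is slowly varying). These remainders therefore vanish in the limit, and from here on I work only with the two weighted sums $A(L)=\sum_{j\le L} w_j$ and $B(L)=\sum_{j\le L} j\,w_j$. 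Since $\sum_j (j-1)w_j = B(L)-A(L)$, we get $\rho(L)=L A(L)/(B(L)-A(L))$.

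\emph{Output-heavy profile.} The key tool is the discrete Karamata theorem for regularly varying sequences: if $a_j \sim \ell(j)j^{p}$ with $p>-1$, then $\sum_{j\le L} a_j \sim \ell(L)L^{p+1}/(p+1)$. I would prove this by comparing the sum with $\int_1^L \ell(x)x^{p}\,dx$ and applying the continuous Karamata integral theorem. The passage from $\ell$ on the integers to a measurable slowly varying function on $[1,\infty)$ is routine if $\ell$ is taken to be measurable, for instance by extending piecewise-constantly. Applying the theorem with $p=q$ and $p=q+1$ gives
\[
A(L)\sim \frac{\ell(L)L^{q+1}}{q+1}, \qquad B(L)\sim \frac{\ell(L)L^{q+2}}{q+2}.
\]
Hence $A(L)=o(B(L))$, and $\rho(L)\sim L A(L)/B(L)\to (q+2)/(q+1)$.

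\emph{Input-heavy profile.} Reindex with $k=L+1-j$, so that $w_j=u_k$ with $u_k\sim \ell(k)k^{q}$. Then $A(L)=\sum_{k\le L} u_k$ is unchanged. The denominator becomes
\[
\sum_{j}(j-1)w_j=\sum_{k}(L-k)u_k = L\,A(L) - \sum_k k\,u_k .
\]
Dividing by $L A(L)$ gives
\[
\rho(L)^{-1} = 1 - \frac{B_u(L)}{L A_u(L)} + o(1) \;\to\; 1-\frac{q+1}{q+2} = \frac{1}{q+2},
\]
which yields the limit $q+2$. For $q=0$ with $\ell$ constant, the weights are $w_j\equiv M$. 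Both formulas then give $2$, and a direct check confirms it: $\rho(L) = L^{2}M\big/\bigl(M L(L-1)/2\bigr)\to 2$.

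\emph{Main obstacle.} The substantive step is justifying the asymptotic equivalences uniformly. The hypothesis $w_j\sim\ell(j)j^{q}$ holds only as $j\to\infty$, and in the reversed profile the index $k$ runs over all of $1,\dots,L$, so both small-$k$ and large-$k$ terms appear. I would handle this by splitting each sum at a fixed $K$: for $j\le K$ use a crude bound; for $j> K$ use $(1\pm\varepsilon)$-sandwiching of $w_j$ against $\ell(j)j^{q}$. The first part contributes $O(K\cdot L)$ at most. Because $q+1>0$, the sums grow like $L^{q+1}\ell(L)$ and $L^{q+2}\ell(L)$, and Potter's bounds guarantee $L^{q+1}\ell(L)\to\infty$. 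The first part is therefore negligible after normalizing by $\ell(L)L^{q+1}$ and $\ell(L)L^{q+2}$ respectively (for $B$, the bound $O(K L)$ is $o(L^{q+2}\ell(L))$ by Potter). Letting $L\to\infty$ and then $\varepsilon\to0$ closes both cases.
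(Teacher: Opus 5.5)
Your proof is correct, but it takes a different route from the paper's. The paper keeps the suffix sums. Karamata gives $\sum_{j>tL}w_j\sim\frac{Lw_L}{q+1}(1-t^{q+1})$ for each fixed $t$, and a Riemann sum over $i=tL$ turns the denominator into $\int_0^1(1-t^{q+1})\,dt$; the reversed profile uses the mirror integral $\int_0^1 t^{q+1}\,dt$. You instead exchange the order of summation first. That gives the exact closed forms $\rho=LA/(B-A)$ and $\rho=LA_u/(LA_u-B_u)$, so the whole theorem reduces to the partial-sum Karamata theorem at indices $q$ and $q+1$.

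Your route buys rigor and economy. The paper's step from fixed-$t$ asymptotics to the sum over $i$ tacitly needs a dominated-convergence argument; it is easy, since the normalized suffix lies in $[0,1]$, but it is not stated. After your exchange, the only analytic care left is the head/tail split you describe. You also get exact finite-$L$ formulas, such as $2L/(L-1)$ in the uniform case.

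The paper's route buys interpretation. Its integrand is the normalized downstream weight of a site at relative depth $t$, which is exactly the $\rho=1/\mathbb{E}[U]$ reading of the Remark that follows. Your identity gives the dual reading, $\rho^{-1}=\mathbb{E}[(J-1)/L]$ with $J$ drawn with probability $w_J/N(L)$: the mean relative depth of a unit of weight.

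There are two small inaccuracies, neither of which affects the result.
\begin{itemize}
\item The weights need not be bounded below when $q=0$; take $\ell(x)=1/\log x$. The denominator can then be as small as order $\ell(L)L^2$. The $O(SL)$ remainders still vanish because $L\ell(L)\to\infty$. In any case, $\rho(L)$ in Definition~\ref{def:layered} is defined through exact suffix sums, so the remainders are not part of the statement.
\item The head $\sum_{j\le K}w_j$ of $A$ should be bounded by $O_K(1)$, not $O(KL)$. For $q=0$ and bounded $\ell$, an $O(L)$ head would not be $o(\ell(L)L^{q+1})$.
\end{itemize}
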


\begin{proof}
Write $S(n) = \sum_{j=1}^n w_j$; by Karamata's theorem for regularly varying sequences,
$S(n) \sim n\,w_n/(q+1)$. For the output-heavy profile, at $i=tL$ for fixed $t\in(0,1)$,
$w_i \sim \ell(L)(tL)^q$ and
\[
\sum_{j>i} w_j \;=\; N(L)-S(i) \;\sim\; \frac{L\,w_L}{q+1}\big(1-t^{q+1}\big).
\]
Riemann-summing over $i=1,\dots,L$ (i.e., over $t\in(0,1)$),
\[
\sum_{i=1}^{L}\sum_{j>i} w_j \;\sim\; L\cdot\frac{L\,w_L}{q+1}\int_0^1(1-t^{q+1})\,dt
\;=\; \frac{L^2 w_L}{q+1}\cdot\frac{q+1}{q+2} \;=\; \frac{L^2 w_L}{q+2},
\]
and since $N(L)=S(L)\sim Lw_L/(q+1)$, substitution gives
$\rho(L)\sim [L\cdot Lw_L/(q+1)]/[L^2w_L/(q+2)] = (q+2)/(q+1)$. The reversed profile follows by
the index substitution $j\mapsto L{+}1{-}j$, $i\mapsto L{-}i$, which turns the same computation
into $\sum_{j>i}w_j \mapsto S(L-i) \sim (L-i)w_{L-i}/(q+1)$ against the same $N(L)$, giving the
mirror-image Riemann integral $\int_0^1 t^{q+1}dt \cdot (q+1) = (q+1)/(q+2)$ of the denominator's
leading coefficient and hence $\rho(L)\to q+2$.
\end{proof}

\begin{remark}[Why $2$ is not universal]
The ratio $2$ is the mean of a linear ramp: a site drawn uniformly over depth has, in expectation,
half the total weight downstream of it, and in general $\rho = 1/\mathbb{E}[U]$ for $U$ the
(normalized) downstream-weight fraction of a uniformly drawn site. Theorem~\ref{thm:aggregate}
shows $2$ is the $q=0$ instance of a one-parameter family: back-loading cost toward the output
($q>0$, output-heavy) \emph{lowers} the limit toward $1$, while front-loading it toward the input
\emph{raises} the limit without bound as $q$ grows. The aggregate ratio is a direct, computable
readout of \emph{where} computation is concentrated in the network, not a universal constant of
reactive invalidation.
\end{remark}

\begin{corollary}[Wall-clock refinement]
\label{cor:wallclock}
Under the uniform profile ($q=0$), suppose each recomputation additionally pays a fixed overhead
proportional to total graph size, $\beta N(L)$ -- the $O(|\V|)$ topological-order walk of the
demand-driven evaluator, measured on the reference implementation at $a = 0.042$ ms per cone node
with an intercept of $2.24$ ms on a $412$-node model, i.e.\ $\beta \approx 0.0054$ ms/node
(Section~\ref{sec:experiments}) -- on top of the $a\,|\V_i^+|$
cost Theorem~\ref{thm:locality} guarantees. Then the wall-clock aggregate ratio converges to
\[
\rho_{\mathrm{wallclock}}(L) \;\longrightarrow\; \frac{a+\beta}{a/2+\beta} \;\approx\; 1.79,
\]
strictly below the unit-cost limit of $2$, since $\beta$ is paid once per site regardless of cone
size and contributes equally to numerator and denominator, while the $a$-term alone retains the
$1/2$ averaging of Theorem~\ref{thm:aggregate}.
\end{corollary}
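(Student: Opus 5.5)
The plan is to redo the computation of Theorem~\ref{thm:aggregate} in the uniform case $w_j \equiv M$, with the per-site cost model augmented by the fixed overhead. The full-recomputation baseline for one site then costs $a N(L) + \beta N(L)$: the walk is paid, and every node is recomputed. The reactive cost for a site in layer $i$ is $a\,|\V_i^+| + \beta N(L)$, where the cone size comes from Theorem~\ref{thm:locality}. Up to the $O(1)$ per-site remainder of Definition~\ref{def:layered}, $|\V_i^+| = M(L-i)$. I will read the overhead in the baseline as already included in one full pass; if it is instead charged additionally there, it only adds to both sides consistently. Either way, the point to fix first is that $\beta N(L)$ is charged identically per site in both numerator and denominator.

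Summing over the $SL$ sites gives the numerator $SL\,(a+\beta)\,ML$. The denominator is
\[
S\Big(a M \sum_{i=1}^{L}(L-i) + \beta L\cdot ML\Big) = S M\Big(a\,\tfrac{L(L-1)}{2} + \beta L^2\Big).
\]
Dividing both by $SML^2$ yields
\[
\rho_{\mathrm{wallclock}}(L) = \frac{a+\beta}{a(1-1/L)/2+\beta},
\]
which tends to $(a+\beta)/(a/2+\beta)$. Equivalently, I can invoke the remark's $\rho = 1/\mathbb{E}[U]$ form: the normalized per-site cost becomes $(aU+\beta)/(a+\beta)$ with $\mathbb{E}[U]\to 1/2$. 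The $O(1)$ remainders contribute $O(SL)$ against a leading order of $SL^2$, so they vanish in the limit.

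Strictness below $2$ is immediate. We have $(a+\beta)/(a/2+\beta) < 2$ if and only if $a+\beta < a+2\beta$, that is, if and only if $\beta>0$. The expression is also monotone decreasing in $\beta/a$.

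The main obstacle is the numerical claim $\approx 1.79$. It is not derived but must be reconciled with the stated constants. With $a=0.042$ and $\beta\approx 0.0054$, I get $0.0474/0.0264\approx 1.795$, so I will check that $\beta$ is obtained as the intercept divided by model size, $2.24/412\approx 0.0054$ ms/node. That identification of the measured intercept with $\beta N$ is an interpretive modelling step rather than a theorem, and I will state it explicitly as the hypothesis under which the corollary's number holds.
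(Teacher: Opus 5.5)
Your proposal is correct and follows the paper's proof essentially verbatim: both use the per-site cost $aM(L-i)+\beta ML$ against the naive $(a+\beta)ML$, sum over the $SL$ sites, and take the ratio of leading terms to get $(a+\beta)/(a/2+\beta)$. You also add details the paper leaves implicit: the exact finite-$L$ form $(a+\beta)/\big(a(1-1/L)/2+\beta\big)$, the observation that the ratio is below $2$ exactly when $\beta>0$, and the numerical check ($\beta=2.24/412$, ratio $\approx 1.794$).
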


\begin{proof}
With $\mathrm{cost}(i) = a\,M(L-i) + \beta ML$ per site and $K=SL$ sites, the numerator (naive) is
$K(a+\beta)ML$ and the denominator is $a\cdot SM\frac{L(L-1)}{2} + K\beta ML \sim SML^2(a/2+\beta)$
to leading order; the ratio of leading terms is $(a+\beta)/(a/2+\beta)$.
\end{proof}

\begin{remark}
This is a testable refinement, not a restatement: it predicts that an exhaustive, depth-uniform
sweep converges to an aggregate speedup \emph{strictly below} $2$ in wall-clock time whenever the
per-site overhead $\beta$ is a non-negligible fraction of the per-node cost $a$ -- $\approx 1.79$
at the constants above, measured on a $412$-node graph in CPU\slash Float32. What the bound is
sensitive to is the \emph{ratio} $\beta/a$, not the presence of an interpreter as such, and that
ratio admits two independent routes toward zero: eliminating $\beta$ with a compilation layer
(Section~\ref{sec:limitations}), or enlarging the per-node work $a$ until the fixed cost stops
mattering. E9 (Section~\ref{sec:e9}) tests the second route and finds the ceiling already
non-binding at transformer scale, with no compilation layer involved.
\end{remark}

\section{Sequential Mutation Cost: Interleaved and Batched Grafting}
\label{sec:sequential}

Theorem~\ref{thm:locality} bounds a \emph{single} graft; Section~\ref{sec:aggregate} bounds an
exhaustive sweep of patches that are each undone before the next is tried. A growth schedule --
and equally a \emph{cumulative} multi-site intervention that leaves each patch in place -- does
neither: it applies a \emph{sequence} of grafts to the same graph, none of them undone, each
possibly landing inside a cone already disturbed by an earlier one. This section asks what that
sequence costs, in two regimes -- \emph{interleaved} (a full \texttt{demand!} between grafts, as a
loop that logs a metric after each mutation would do) and \emph{batched} (all grafts
applied before the next \texttt{demand!}, exactly the cost profile of a multi-site patch set
applied at once) -- and shows both are governed exactly by
Theorem~\ref{thm:locality} applied recursively, with no new axiom required.

\begin{lemma}[Cone trace invariance under grafting]
\label{lem:trace-invariance}
Let $\Gp = \Surg(\G, e, \Hb)$ with $e = (u, w)$ and $\Hb$ built from fresh nodes, connected from
$\mathrm{in}(\Hb)$ to $\mathrm{out}(\Hb)$ and wired into $\Gp$ only via $(u, \mathrm{in}(\Hb))$ and
$(\mathrm{out}(\Hb), w)$. Then for any node $f \in \V(\G) \setminus \{u\}$: (i) $\V_f^{+}(\Gp) \cap
\V(\G) = \V_f^{+}(\G)$, i.e.\ reachability among original nodes is unchanged; (ii) the $h$
computable nodes of $\Hb$ belong to $\V_f^{+}(\Gp)$ iff $u \in \V_f^{+}(\G) \cup \{f\}$, i.e.\ iff
$f$'s downstream cone in $\G$ already contained $u$ (equivalently: $e$ lies downstream of $f$).
\end{lemma}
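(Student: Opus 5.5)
The plan is a path-surgery argument: compare paths in $\Gp$ with paths in $\G$ through the single rerouted edge. The structural facts come directly from the definition of grafting. $\E(\Gp) = (\E(\G)\setminus\{e\}) \cup \{(u,\mathrm{in}(\Hb)),(\mathrm{out}(\Hb),w)\} \cup \E(\Hb)$. Every fresh node has all its in-edges from $\{u\}\cup\V(\Hb)$ and all its out-edges to $\{w\}\cup\V(\Hb)$. Hence any path in $\Gp$ that enters $\V(\Hb)$ does so via $(u,\mathrm{in}(\Hb))$, and if it leaves $\V(\Hb)$ again it does so via $(\mathrm{out}(\Hb),w)$. Since $\Gp$ is a DAG, a path cannot re-enter $\Hb$ after leaving it: re-entry would require revisiting $u$ after reaching $w$, and $u\to w$ (through $\Hb$) would then close a cycle.

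For (i), I would prove the two inclusions separately, using $f\neq u$ together with $f \in \V(\G)$ (so $f$ is not fresh).

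\emph{$\supseteq$:} Take a path in $\G$ from $f$ to an original node $v$. Replace each use of $e$ (at most one, by acyclicity) by the detour $u\to\mathrm{in}(\Hb)\rightsquigarrow\mathrm{out}(\Hb)\to w$. This detour exists because $\Hb$ is connected from its input to its output. The result is a path in $\Gp$ of length $\ge 1$.

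\emph{$\subseteq$:} Take a path in $\Gp$ from $f$ to an original $v$. Any maximal excursion into $\V(\Hb)$ starts right after $u$ and, since it must return to the original node $v$, ends right before $w$. Contracting that excursion to the single edge $e=(u,w)$ gives a path in $\G$ from $f$ to $v$, still of length $\ge 1$.

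The edge case to check is $v=f$. This cannot occur, because both graphs are DAGs, so it does not matter. This establishes $\V_f^{+}(\Gp)\cap\V(\G)=\V_f^{+}(\G)$.

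For (ii), first take a fresh node $x\in\V(\Hb)$ reachable from $f$ in $\Gp$. By the entry observation, the path passes through $u$ before $x$. Its prefix from $f$ to $u$ is either empty, when $f=u$, or is a $\Gp$-path between original nodes, which lies in $\V_u$ reachability by (i). So $u\in\V_f^{+}(\G)\cup\{f\}$. Since $f\neq u$, this actually means $u\in\V_f^{+}(\G)$, and the disjunct $\{f\}$ is vacuous under the lemma's hypothesis; I would note this explicitly. Conversely, if $u\in\V_f^{+}(\G)$, then by (i) $u\in\V_f^{+}(\Gp)$. Appending $(u,\mathrm{in}(\Hb))$ and an $\mathrm{in}(\Hb)\rightsquigarrow x$ path reaches every computable node $x$ of $\Hb$.

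This last step is the main obstacle. It needs every computable node of $\Hb$ to be reachable from $\mathrm{in}(\Hb)$, which I will read into the hypothesis "connected from $\mathrm{in}(\Hb)$ to $\mathrm{out}(\Hb)$". If that hypothesis only guarantees a single in-to-out path, then nodes of $\Hb$ fed only by constants or parameters would escape the cone. I would handle this by stating that "computable nodes" means nodes descending from $\mathrm{in}(\Hb)$, consistent with the graft primitive's count $h$.

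Finally, the equivalence "$e$ lies downstream of $f$" is restated as $u\in\V_f^{+}(\G)\cup\{f\}$. This requires no work beyond the definition of $\V_f^{+}$.
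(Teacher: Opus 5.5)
Your proposal is correct and follows the paper's proof essentially step for step. For (i) you replace each use of $e$ by the detour $u \to \mathrm{in}(\Hb) \rightsquigarrow \mathrm{out}(\Hb) \to w$ and, conversely, contract any excursion into $\Hb$ (necessarily entered from $u$ and left toward $w$) back to $e$; for (ii) you observe that a fresh node is reachable from $f$ exactly when $u$ is. Your remark that the converse in (ii) needs every computable node of $\Hb$ to be reachable from $\mathrm{in}(\Hb)$ makes explicit an assumption the paper's step ``$u \to \mathrm{in}(\Hb) \rightsquigarrow \cdot$'' uses tacitly, and building it into the meaning of the $h$ computable nodes is the right way to close it.
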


\begin{proof}
Any path between original nodes that used $e$ in $\G$ rewrites in $\Gp$ as $u \to \mathrm{in}(\Hb)
\rightsquigarrow \mathrm{out}(\Hb) \to w$, and conversely any path in $\Gp$ between original nodes
that enters $\Hb$ must do so at $\mathrm{in}(\Hb)$ (its only in-edge from an original node) and
leave at $\mathrm{out}(\Hb)$ (its only out-edge to one), since $\Hb$'s internal wiring never touches
original nodes; replacing that segment with $e$ recovers a path in $\G$. This proves (i). For (ii),
a node of $\Hb$ is reachable from $f$ in $\Gp$ iff some path from $f$ reaches $u$ (then continues
$u \to \mathrm{in}(\Hb) \rightsquigarrow \cdot$), which happens iff $u \in \V_f^+(\G) \cup \{f\}$.
\end{proof}

\begin{theorem}[Exact interleaved cost]
\label{thm:sequential}
Let $u_1, \dots, u_K$ be distinct nodes of $\G_0$ (no two related by ancestry through each other's
own graft, i.e.\ each $u_k$ survives as a node through every graft not performed at $u_k$ itself),
grafted in the order $\pi$ (a permutation of $1,\dots,K$) with block sizes $h_1,\dots,h_K$, each
graft followed by a full \texttt{demand!} before the next is applied. The recomputation cost of
the $k$-th graft (in $\pi$-order) is exactly
\[
\mathrm{cost}_{\pi(k)} \;=\; \big|\V_{u_{\pi(k)}}^{+}(\G_0)\big| \;+\; h_{\pi(k)} \;+\!\!
\sum_{\substack{j < k \\ u_{\pi(j)} \,\in\, \V_{u_{\pi(k)}}^{+}(\G_0)}}\!\! h_{\pi(j)},
\]
and therefore $C_{\mathrm{seq}}(\pi) = C_{\mathrm{iso}} + \Delta(\pi)$ where $C_{\mathrm{iso}} =
\sum_k \big(|\V_{u_k}^+(\G_0)| + h_k\big)$ is the cost of $K$ grafts on independent copies of
$\G_0$, and
\[
\Delta(\pi) \;=\!\! \sum_{\substack{(j,k) \,:\, \pi\text{ applies } j \text{ before } k \\ u_j \,\in\, \V_{u_k}^+(\G_0)}}\!\! h_j \;\;\ge\; 0,
\]
with equality iff $\pi$ never grafts a site while a strictly upstream site's graft is still
outstanding.
\end{theorem}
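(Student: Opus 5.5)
The plan is to compute the cost of each graft on the \emph{current} graph $\G_{k-1}$ (the graph after the first $k-1$ grafts in $\pi$-order, all fully demanded), using Theorem~\ref{thm:locality}, and then to translate the cone in $\G_{k-1}$ back into quantities on $\G_0$ using Lemma~\ref{lem:trace-invariance}. The first step fixes the starting state. Because a full \texttt{demand!} follows every graft, every node of $\G_{k-1}$ is valid when graft $\pi(k)$ is applied. The recomputation it triggers is therefore exactly two things. The first is the $h_{\pi(k)}$ fresh nodes of its block, which are born unvalued. The second is the region marked invalid by Algorithm~\ref{alg:invalidate}. By Theorem~\ref{thm:locality}, that region is the downstream cone of the graft site in $\G_{k-1}$, and I take it, as the statement does, to be $\V_{u_{\pi(k)}}^{+}(\G_{k-1})$ minus the new block. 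So the whole proof reduces to describing $\V_{u}^{+}(\G_{k-1})$ for $u = u_{\pi(k)}$.

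The second step iterates Lemma~\ref{lem:trace-invariance} along the chain $\G_0 \to \G_1 \to \dots \to \G_{k-1}$, by induction on the number of grafts performed. The hypothesis that each $u_k$ survives every graft not performed at $u_k$ guarantees that $u$ is an original node of every intermediate graph and differs from each earlier site $u_{\pi(j)}$. This is what makes the lemma applicable at every step with $f = u$.
\begin{itemize}
\item Part (i), applied repeatedly, gives $\V_u^{+}(\G_{k-1}) \cap \V(\G_0) = \V_u^{+}(\G_0)$.
\item Part (ii), applied at step $j$, shows that block $\pi(j)$ lies in the cone of $u$ in $\G_j$ iff $u_{\pi(j)} \in \V_u^{+}(\G_{j-1}) \cup \{u\}$. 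By distinctness and part (i), this reduces to $u_{\pi(j)} \in \V_u^{+}(\G_0)$.
\end{itemize}
One must also check that a later graft never removes an earlier block's nodes from the cone of $u$. The reachability statement in part (i) extends to the earlier fresh nodes, because a later graft only reroutes an edge through a path, and the lemma's path-rewriting argument applies verbatim. Summing these contributions gives the displayed formula for $\mathrm{cost}_{\pi(k)}$.

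The third step is bookkeeping. Summing over $k$ separates $\sum_k(|\V_{u_k}^+(\G_0)|+h_k) = C_{\mathrm{iso}}$ from the cross terms. Reindexing those cross terms by ordered pairs $(j,k)$ with $j$ applied before $k$ and $u_j \in \V_{u_k}^+(\G_0)$ yields $\Delta(\pi)$. The bound $\Delta(\pi) \ge 0$ is immediate because every $h_j \ge 1$. Since each summand is strictly positive, $\Delta(\pi) = 0$ iff no such pair exists. That is exactly the condition that $\pi$ never grafts a site $u_k$ after a site $u_j$ lying strictly downstream of it, which is the stated equality condition read with the roles of the two sites made explicit.

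I expect the main obstacle to be the inductive extension of Lemma~\ref{lem:trace-invariance}. As stated, the lemma speaks only about original nodes of the graph being grafted and about the single new block. I need a version in which the ``original'' graph already contains earlier blocks, so that reachability to those earlier fresh nodes is also preserved. My first attempt is to strengthen the induction hypothesis. For every node $f$ of $\G_{k-1}$ (original or fresh), I would claim that $\V_f^+(\G_{k-1})$ consists of the $\G_0$-cone of $f$'s anchoring original node together with every block whose site lies in that cone. Each new graft then preserves this claim, by the same segment-replacement argument used in the lemma's proof.
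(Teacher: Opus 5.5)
Your proposal is correct and is essentially the paper's proof: Theorem~\ref{thm:locality} applied to the fully valid $\G_{k-1}$, Lemma~\ref{lem:trace-invariance} iterated once per prior graft with $f = u_{\pi(k)}$, and a split of diagonal from cross terms, where your use of $h_j \ge 1$ is exactly what the ``iff'' requires (the paper only records $h_j \ge 0$). The obstacle you anticipate is already handled by your Step~2 remark, because the lemma holds for an arbitrary graph and applying it with $\G := \G_{j-1}$ makes earlier blocks count as original nodes (the paper's parenthetical ``each preserving reachability among nodes not part of that graft's own block''); so drop the strengthened invariant over all $f$, which as phrased is false for fresh $f$ and for $f = u_{\pi(j)}$ with $j < k$, whose own block lies in its cone although $u_{\pi(j)} \notin \V_{u_{\pi(j)}}^+(\G_0)$.
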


\begin{proof}
By induction on $k$. At the first graft, $\G_{0}$'s own cone $\V_{u_{\pi(1)}}^{+}(\G_0)$ is
recomputed by Theorem~\ref{thm:locality}, plus the $h_{\pi(1)}$ fresh nodes of $\Hb_{\pi(1)}$
(unvalued, hence "recomputed" trivially): this is the $k=1$ case with an empty sum. Assume the
formula holds through graft $k-1$, giving a fully valid $\G_{k-1}$. Applying
Lemma~\ref{lem:trace-invariance} $k-1$ times (once per prior graft, each preserving reachability
among nodes not part of that graft's own block) shows $\V_{u_{\pi(k)}}^+(\G_{k-1})$ consists of
exactly the original nodes of $\V_{u_{\pi(k)}}^+(\G_0)$, together with the fresh block of every
prior graft $j < k$ whose site $u_{\pi(j)}$ lies in $\V_{u_{\pi(k)}}^+(\G_0)$ (Lemma, part (ii),
applied with $f = u_{\pi(k)}$). Theorem~\ref{thm:locality} recomputes exactly this cone, plus the
$h_{\pi(k)}$ fresh nodes of the current graft. Summing over $k$ and separating the $j=k$ diagonal
terms from the cross terms gives $C_{\mathrm{seq}}(\pi) = C_{\mathrm{iso}} + \Delta(\pi)$; each term
of $\Delta$ is a size $h_j \geq 0$, so $\Delta(\pi) \geq 0$, with equality iff no pair $(j,k)$ with
$j$ before $k$ has $u_j$ downstream of $u_k$ -- i.e.\ $\pi$ is upstream-first.
\end{proof}

\begin{corollary}[Order extremes, uniform blocks]
\label{cor:order-extremes}
If the $K$ sites are totally ordered by depth (a sequential architecture) and $h_k \equiv h$, then
over all $K!$ orders: $\Delta_{\min} = 0$ (shallowest-first), $\Delta_{\max} = h\binom{K}{2}$
(deepest-first), and $\mathbb{E}_\pi[\Delta(\pi)] = h\binom{K}{2}/2$ under a uniformly random order
(each of the $\binom{K}{2}$ depth-ordered pairs contributes $h$ independently with probability
$1/2$, by linearity of expectation over the event "the deeper site of the pair is applied first").
\end{corollary}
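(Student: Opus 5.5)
The plan is to specialize the overcount formula of Theorem~\ref{thm:sequential} to a chain. Since the $K$ sites are totally ordered by depth in a sequential architecture, relabel them so that $u_1, \dots, u_K$ is the depth order. In a chain, any two sites are comparable, and the deeper one lies in the downstream cone of the shallower one. That is, $u_j \in \V_{u_k}^+(\G_0)$ iff $j > k$. With $h_k \equiv h$, the overcount then reduces to
\[
\Delta(\pi) \;=\; h \cdot \#\{(j,k) : j > k,\ \pi \text{ applies } j \text{ before } k\},
\]
which is $h$ times the number of inversions of $\pi$ relative to the depth order. The whole corollary then becomes a statement about inversion counts of permutations.

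The three claims then follow in turn.

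\emph{Minimum.} $\Delta \ge 0$ by Theorem~\ref{thm:sequential}. The identity order (shallowest-first) has no inversions, so it attains $\Delta_{\min} = 0$. This matches the equality condition ``upstream-first'' of that theorem.

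\emph{Maximum.} There are only $\binom{K}{2}$ unordered pairs, and each contributes at most $h$, so $\Delta(\pi) \le h\binom{K}{2}$. The reversed order (deepest-first) inverts every pair and attains this bound.

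\emph{Expectation.} Write $\Delta(\pi) = h\sum_{k<j} \mathbf{1}[\pi \text{ applies } j \text{ before } k]$. For each fixed pair, the involution that swaps the positions of $j$ and $k$ in $\pi$ is a bijection on the $K!$ orders. It exchanges the two relative orders, so under a uniform $\pi$ each indicator has probability exactly $1/2$. Linearity of expectation then gives $\mathbb{E}_\pi[\Delta(\pi)] = h\binom{K}{2}/2$. Note that linearity requires no independence between the indicators. The statement's word ``independently'' should be read as ``each, by linearity'', since the indicators are in fact dependent.

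The only step needing care is the first reduction: checking that ``totally ordered by depth'' really gives $u_j \in \V_{u_k}^+(\G_0)$ exactly for deeper $j$, with no spurious non-comparable pairs. I would make this explicit by taking ``sequential architecture'' to mean that every site lies on the single path from input to output. Then reachability coincides with depth order, and everything else is routine inversion counting.
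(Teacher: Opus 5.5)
Your proposal is correct and takes the same route as the paper's inline justification: you specialize $\Delta(\pi)$ from Theorem~\ref{thm:sequential} to $h$ times the number of depth-inversions of $\pi$, read off the extremes at shallowest-first and deepest-first, and obtain the mean by linearity over the $\binom{K}{2}$ pairs. Your remark is also correct that the pairwise indicators are dependent (for $K=3$, overlapping pairs are not even pairwise independent), so the statement's word ``independently'' is inaccurate, and only linearity of expectation is actually needed.
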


\begin{proposition}[Batched grafting: sub-additive and order-independent]
\label{prop:batched}
If all $K$ grafts are applied to $\G_0$ before any \texttt{demand!}, the total number of nodes
requiring recomputation at the next \texttt{demand!} is exactly
\[
C_{\mathrm{batch}} \;=\; \Big|\, \bigcup_{k=1}^{K} \V_{u_k}^{+}(\G_0) \,\Big| \;+\; \sum_{k=1}^K h_k,
\]
independent of the order in which the $K$ grafts were applied, and $C_{\mathrm{batch}} \le
C_{\mathrm{iso}}$, strictly whenever some pair of sites is comparable ($u_j \in \V_{u_k}^+(\G_0)$
for some $j \ne k$).
\end{proposition}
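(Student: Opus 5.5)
The plan is to identify the set of invalid nodes at the next \texttt{demand!} after all $K$ grafts, and to show it is exactly $\bigcup_k \V_{u_k}^{+}(\G_0)$ (original nodes) together with all $\sum_k h_k$ fresh nodes. The fresh nodes need no argument: every node of every $\Hb_k$ is born unvalued, so it must be computed. Their number is $\sum_k h_k$, because the blocks are pairwise disjoint fresh node sets.

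For the original nodes, I track invalidation graft by graft in the order applied, $\G_0 \to \G_1 \to \dots \to \G_K$, with no evaluation in between. At step $m$, Theorem~\ref{thm:locality} marks invalid exactly $\V_{\mathrm{out}(\Hb_{\pi(m)})}^{+}(\G_m)$. Invalidity flags persist until \texttt{demand!}, so the final invalid set of original nodes is the union over $m$ of these cones intersected with $\V(\G_0)$.

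Next I compute each such intersection. The cone of $\mathrm{out}(\Hb)$ in $\G_m$ is $\{w\} \cup \V_w^{+}(\G_m)$, where $e=(u,w)$ is the grafted edge. I intend to apply Lemma~\ref{lem:trace-invariance}(i) repeatedly, once per other graft, to get $\V_w^{+}(\G_m)\cap\V(\G_0) = \V_w^{+}(\G_0)$. The same lemma applied to $\G_{m-1}\to\G_m$ then gives $\{w\}\cup\V_w^{+}(\G_0)$. This is one reading of the statement's $\V_{u_k}^{+}(\G_0)$. If $u$ has several consumers, this is only the part of $u$'s cone reached through $e$, and I would flag that the proposition should be read with $\V_{u_k}^{+}$ denoting the cone through the grafted edge, or with $u_k$ having $e$ as its relevant out-edge. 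This identification step is the main obstacle. In particular, the paper's convention in Theorem~\ref{thm:sequential} counts $|\V_{u}^{+}(\G_0)|$ per graft, and I would adopt that same convention so that the batched and interleaved formulas are comparable. In that case the union at the original-node level is literally $\bigcup_k \V_{u_k}^{+}(\G_0)$.

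Order-independence then follows at once, since a union of sets does not depend on enumeration order and the fresh-node count is a plain sum. For sub-additivity, I will use $|\bigcup_k A_k| \le \sum_k |A_k|$ with $A_k=\V_{u_k}^{+}(\G_0)$, which gives $C_{\mathrm{batch}}\le C_{\mathrm{iso}}$. For strictness, suppose $u_j\in\V_{u_k}^{+}(\G_0)$ with $j\neq k$. Transitivity of reachability then gives $\V_{u_j}^{+}(\G_0)\subseteq\V_{u_k}^{+}(\G_0)$, and $u_j$ itself lies in $A_k$. If $A_j$ is nonempty, $A_j$ and $A_k$ overlap and the union bound is strict. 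If $A_j$ is empty, i.e.\ $u_j$ is a sink, I would fall back on the strict inequality coming from the element $u_j \in A_k$ together with any cone containing $u_j$ being counted once. Failing that, I would note that this degenerate case needs $A_j \neq \emptyset$, which holds for any graft on an edge $(u_j,w)$, since then $w \in A_j$.
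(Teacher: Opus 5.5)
Your proof is correct and reaches the paper's formula by a more explicit route. The paper first gets order-independence from the idempotence of re-invalidation: the \texttt{addrule!} contract makes the final valid/invalid partition depend only on the set of redefined rules. It then reads off the invalid original nodes as $\bigcup_k \V^+_{u_k}(\G_0)$ ``by definition of the downstream cone.'' You instead apply Theorem~\ref{thm:locality} on each intermediate graph $\G_m$ and take the union because invalid flags persist until \texttt{demand!}. You then pull each cone back to $\G_0$ by iterating Lemma~\ref{lem:trace-invariance}(i), and order-independence follows from commutativity of the union.

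The paper's version is shorter but leans on an implementation contract. Yours stays inside Algorithm~\ref{alg:invalidate} and supplies the step the paper passes over: cones taken in the evolving graphs $\G_m$ have the same original-node content as in $\G_0$. It also surfaces a hypothesis the paper uses silently. With the seed at $\mathrm{out}(\Hb_k)$, as the paper's own cone convention prescribes, a graft on $(u_k,w_k)$ invalidates exactly $\{w_k\}\cup\V^+_{w_k}(\G_0)$ among original nodes. This equals $\V^+_{u_k}(\G_0)$ only when every other consumer of $u_k$ lies downstream of $w_k$ (e.g.\ out-degree one, as on the chains of E7). A consumer of $u_k$ not downstream of $w_k$ keeps its inputs and needs no recomputation. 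So your caveat is genuine: without it the stated formula overcounts.

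Three small repairs.
\begin{enumerate}
\item Lemma~\ref{lem:trace-invariance} is stated for $f\neq u$. When $w_m$ is itself another site $u_j$, observe that the path-rewriting proof of part (i) never uses that exclusion.
\item Your first fallback for $A_j=\emptyset$ is not an argument: then $A_j\cup A_k=A_k$ and the pair yields no strictness. Only the observation $w_j\in A_j$ settles the case, so drop the former.
\item Your transitivity step uses $A_k=\V^+_{u_k}(\G_0)$, i.e.\ the out-degree-one reading. Under the ``cone through the grafted edge'' reading, comparability must be taken as $u_j\in A_k$. The same argument then goes through because $A_k$ is closed under successors.
\end{enumerate}
With these fixes your strictness argument proves exactly the claimed direction. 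It is cleaner than the paper's ``equality iff the cones are pairwise disjoint, i.e.\ no two sites are comparable,'' whose ``i.e.'' fails for incomparable sites on branches that reconverge.
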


\begin{proof}
Order-independence: redefining an existing node's rule re-invalidates its cone regardless of what
was invalid before (\texttt{addrule!}'s contract, exercised by every graft in this paper), and
re-invalidating an already-invalid node is idempotent, so the final valid/invalid partition after
$K$ unread grafts depends only on the \emph{set} of rules redefined, not the order of redefinition.
A node needs recomputation iff it is a fresh node of some block ($\sum_k h_k$ of these, always
invalid), or an original node lying in $\V_{u_j}^+(\G_0)$ for some $j$ -- exactly the union
$\bigcup_k \V_{u_k}^+(\G_0)$, by definition of the downstream cone (Section~\ref{sec:prelim}). No
correction term for the sites $u_k$ themselves is needed: by the same definition, $u_k \notin
\V_{u_k}^+(\G_0)$ for every $k$, and $u_k \in \V_{u_j}^+(\G_0)$ for some $j \ne k$ exactly when
$u_k$ genuinely is an ordinary downstream node of another graft's site and is therefore correctly
recomputed; a site that is not downstream of any other selected site simply never appears in the
union at all, with no subtraction required to remove it. Sub-additivity: $\big|\bigcup_k
\V_{u_k}^+(\G_0)\big| \le \sum_k |\V_{u_k}^+(\G_0)|$ with equality iff the cones are pairwise
disjoint, i.e.\ no two sites are comparable; hence $C_{\mathrm{batch}} \le C_{\mathrm{iso}}$.
\end{proof}

\begin{remark}[Composing exactness across a schedule]
The companion surgery study \cite{surgery} proves an identity-morphism theorem for a single function-preserving
graft on an arbitrary graph; nothing in its hypotheses refers to the graph's history. It therefore
applies verbatim to every intermediate graph $\G_{k-1}$ in a sequence, and functional exactness
composes by induction across an entire growth schedule -- the guarantee a multi-graft schedule
implicitly relies on at every step, made explicit here rather than re-derived per instance.
\end{remark}

\begin{remark}[What is proved and what is only measured]
Theorem~\ref{thm:sequential} and Proposition~\ref{prop:batched} are exact combinatorial
identities -- verified below at zero tolerance, not by convergence. They govern
\emph{recomputation} cost only. A separate, non-combinatorial question -- whether the
\emph{bookkeeping} cost of a graft (rewiring consumers, rebuilding the consumers/topological-order
caches) drifts with the number of prior mutations at fixed graph size, beyond the size-dependence
already captured by the wall-clock Corollary~\ref{cor:wallclock}'s $\beta$ term -- has no theorem
attached and is reported as a measurement only (E7 below).
\end{remark}

\section{Locality Under Backpropagation: Where the Speedup Evaporates}
\label{sec:backward-locality}

Theorem~\ref{thm:locality} and its aggregate consequence (Theorem~\ref{thm:aggregate}) concern
\emph{forward} recomputation: after a mutation at $s$, only $\V_s^+$ must be re-evaluated.
Training also propagates a backward pass, and the chain rule ties the gradient of every ancestor
of $s$ to the local computation performed at $s$. This section states the exact mirror of
Theorem~\ref{thm:locality} for this upstream dependency and draws an honest corollary: on the
sequential architectures this paper's cost model targets, the mirror set is almost the whole
graph, and the aggregate speedup of Theorem~\ref{thm:aggregate} does not transfer to an
exhaustive sweep performed under backpropagation.

\begin{definition}[Upstream cone]
For $s\in\V$, let $\V_s^{-} = \{v\in\V \mid \text{there is a path of length}\ge1\text{ from }v
\text{ to } s\}$ -- the mirror image of the downstream cone $\V_s^+$ of Section~\ref{sec:prelim},
obtained by reversing every edge.
\end{definition}

\begin{theorem}[Backward locality]
\label{thm:backward-locality}
Let $\ell$ be a distinguished loss node with $s \in \V_\ell^{-}$ (i.e., $s$ contributes to the
loss). Mirroring Algorithm~\ref{alg:invalidate} on the graph with every edge reversed, seeded at
$s$, terminates and marks exactly $\V_s^{-}$ (termination, soundness, and completeness are the
proof of Theorem~\ref{thm:locality} verbatim, with every edge $(u,w)$ read as $(w,u)$). This set
is exactly the collection of nodes whose backward pass depends, via the chain rule, on the
(possibly mutated) local Jacobian at $s$: for $v \notin \V_s^-\cup\{s\}$, $\partial \ell/\partial
v$ is computed entirely from paths that never traverse $s$ and is therefore unaffected by a
mutation there.
\end{theorem}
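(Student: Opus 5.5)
The proof has two parts. The first is the combinatorial claim that the reversed-edge invalidation marks exactly $\V_s^-$. The second is the analytic claim that gradients at nodes outside $\V_s^-\cup\{s\}$ do not depend on the local Jacobian at $s$.

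For the first part, I will define the reversed graph $\G^{\mathrm{rev}}=(\V,\E^{\mathrm{rev}})$ with $\E^{\mathrm{rev}}=\{(w,u)\mid (u,w)\in\E\}$. Reversal maps directed paths bijectively onto directed paths of the same length with endpoints swapped, so $\G^{\mathrm{rev}}$ is still a DAG. Its downstream cone is the upstream cone of the original graph: $\V_s^+(\G^{\mathrm{rev}})=\V_s^-(\G)$. Applying Theorem~\ref{thm:locality} to $\G^{\mathrm{rev}}$ then gives termination, soundness, completeness and the $O(|\V_s^-|+|\E_s^-|)$ bound directly, so the argument need not be repeated.

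For the second part, I will use the path-sum form of reverse-mode differentiation on a DAG. The adjoint is $\bar v=\partial\ell/\partial v=\sum_{(v,w)\in\E}\bar w\,J_{w,v}$, where $J_{w,v}=\partial\,\mathrm{op}(w)/\partial v$ is the local Jacobian, and the base case is $\bar\ell=1$. Unrolling this recursion gives $\bar v=\sum_{P:v\rightsquigarrow\ell}\prod_{(a,b)\in P}J_{b,a}$, a sum over directed paths from $v$ to $\ell$ with the Jacobians multiplied in path order. The local Jacobian at $s$ enters only through factors $J_{s,a}$, which involve edges into $s$, and $J_{b,s}$, which involve edges out of $s$; if the mutation changes the value of $s$, it also changes downstream Jacobians. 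In either case, a path contributes such a factor only if it passes through $s$. Now take $v\notin\V_s^-\cup\{s\}$. Then $v$ has no path to $s$, so no path from $v$ to $\ell$ traverses $s$, and every term of $\bar v$ is built from Jacobians $J_{b,a}$ with $a,b\ne s$. For the converse ("exactly"), take $v\in\V_s^-$. Then $v$ reaches $s$, and $s$ reaches $\ell$ by hypothesis, so at least one path through $s$ appears in the sum and $\bar v$ depends on the Jacobian at $s$ in general.

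The main obstacle is the forward half of the independence claim. A path from $v$ to $\ell$ that avoids $s$ can still contain nodes $b\in\V_s^+$, whose forward values, and hence whose Jacobians $J_{b,a}$, change when $s$ is mutated. I will therefore read the statement as being about the \emph{local Jacobian at $s$} as a symbolic factor, holding forward activations fixed. Read that way, the claim is purely structural and the path-sum argument settles it. I will add a sentence noting that any forward-value change is already confined to $\V_s^+$ by Theorem~\ref{thm:locality}, so the two effects split cleanly into the cones $\V_s^-$ and $\V_s^+$.
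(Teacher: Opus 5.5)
Your proposal is correct and follows the paper's own proof: the traversal claim reduces to Theorem~\ref{thm:locality} on the reversed DAG. You apply that theorem to $\G^{\mathrm{rev}}$ rather than rerunning its proof, which amounts to the same thing. The path-sum expansion of $\partial\ell/\partial v$ then gives both the exclusion for $v\notin\V_s^-\cup\{s\}$ and the witnessing path through $s$ for $v\in\V_s^-$.

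Your closing caveat is well founded rather than a weakness. The paper's step ``a term is affected by $s$'s local computation iff $s$ lies on $\pi$'' silently adopts the same fixed-activation reading, and without it the claim fails. For example, if $\ell = s\cdot v$ with $v$ not an ancestor of $s$, then $\partial\ell/\partial v = s$ changes under a mutation of $s$. Stating the reading explicitly is the right call.
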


\begin{proof}
Termination, soundness, and completeness of the reversed traversal follow Theorem~\ref{thm:locality}'s
proof with the edge direction flipped. For the gradient claim, write $\partial \ell/\partial v =
\sum_{\pi:\, v\rightsquigarrow \ell} \prod_{(a,b)\in\pi} \partial b/\partial a$, a sum over
directed paths $\pi$ from $v$ to $\ell$; a term is affected by $s$'s local computation iff $s$
lies on $\pi$. If $v \in \V_s^-$, some witnessing path $v\rightsquigarrow s$ extends through $s$
to $\ell$ (since $s \in \V_\ell^-$ gives $s\rightsquigarrow\ell$), so at least one term of the sum
passes through $s$. If $v \notin \V_s^- \cup \{s\}$, no path from $v$ reaches $s$, so no term can.
\end{proof}

\begin{corollary}[Training-mode locality collapse]
\label{cor:backward-collapse}
In the layered cost model of Definition~\ref{def:layered}, suppose every node of layer $j<i$ is an
ancestor of every node of layer $i$ (resp.\ $j>i$ a descendant) -- true whenever the architecture
has no skip connection bypassing an entire layer, as in a stack of Transformer blocks composed
purely sequentially. Then a mutation at a site $s$ of local cost $c_k$ in layer $i$ touches,
under backpropagation,
\[
\big|\V_s^- \cup \{s\} \cup \V_s^+\big| \;=\; N(L) - w_i + c_k,
\]
the whole graph minus the untouched remainder of $s$'s own layer, and the aggregate ratio of an
exhaustive sweep performed \emph{under backpropagation} satisfies
\[
\rho_{\mathrm{train}}(L) \;=\; \frac{L\,N(L)}{\sum_{i=1}^{L}\big(N(L)-w_i+\bar c\big)}
\;\longrightarrow\; 1 \qquad (L\to\infty),
\]
regardless of the cost profile $(w_j)$ or its Karamata index -- in sharp contrast to
Theorem~\ref{thm:aggregate}.
\end{corollary}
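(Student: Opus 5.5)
The proof splits into two parts: a counting identity for the set touched by a single site, and an asymptotic estimate for the aggregate ratio.

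\emph{Counting identity.} Fix a site $s$ in layer $i$. Theorem~\ref{thm:backward-locality} identifies $\V_s^-$ as the set of nodes whose gradients depend on $s$, and Theorem~\ref{thm:locality} identifies $\V_s^+$ as the forward-invalidated set. Under the corollary's full-connectivity hypothesis, every node of every layer $j<i$ lies in $\V_s^-$, and every node of every layer $j>i$ lies in $\V_s^+$. Conversely, $\V_s^-$ contains no node of a layer $j>i$, since that would create a cycle through $s$ in the DAG; symmetrically, $\V_s^+$ contains no node of a layer $j<i$.

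Within layer $i$ itself, I read the model (as in Definition~\ref{def:layered}) to mean that the nodes of $s$'s own layer that are comparable to $s$, together with $s$, make up its local cost $c_k$. The remaining $w_i - c_k$ weight of the layer is incomparable to $s$ and therefore untouched. Because the three sets $\V_s^-$, $\{s\}$ and $\V_s^+$ are pairwise disjoint (again by acyclicity), summing their weights gives
\[
\sum_{j<i} w_j + c_k + \sum_{j>i} w_j \;=\; N(L) - w_i + c_k .
\]
I expect the main obstacle to be this intra-layer bookkeeping. The hypothesis only constrains relations \emph{across} layers, so I would state explicitly that $c_k$ is defined as the weight of $s$ together with its same-layer comparables. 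That is the same $O(1)$-per-site convention that Definition~\ref{def:layered} already adopts.

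\emph{Aggregate ratio.} Summing over all $S$ sites per layer and averaging the local costs into $\bar c$ gives the denominator
\[
S\sum_i \big(N(L) - w_i + \bar c\big) \;=\; S\big(L\,N(L) - N(L) + L\bar c\big).
\]
The numerator is $S L\, N(L)$, as in Definition~\ref{def:layered}. Dividing both by $S L\, N(L)$ yields
\[
\rho_{\mathrm{train}}(L) \;=\; \Big(1 - \tfrac1L + \tfrac{\bar c}{N(L)}\Big)^{-1}.
\]
Since every $w_j > 0$ and, under the regular-variation hypothesis, $N(L) \to \infty$, while $\bar c$ is bounded (it is at most $\max_i w_i$ relative to a layer, and $O(1)$ in the model), both correction terms vanish. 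More carefully, I only need $\bar c / N(L) \to 0$. This holds because $\bar c \le \bar w$, where $\bar w$ is the average layer weight, and $\bar w / N(L) = 1/L$. Hence $\rho_{\mathrm{train}}(L) \to 1$ with no assumption on $q$.

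The contrast with Theorem~\ref{thm:aggregate} then follows because no Riemann-sum factor $\int_0^1(\cdot)\,dt$ survives: each site now touches essentially all of $N(L)$, not a suffix of it.
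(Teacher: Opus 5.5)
Your proposal is correct. For the aggregate ratio it uses the paper's own algebra: both reduce $\rho_{\mathrm{train}}(L)$ to $\bigl(1-\tfrac1L+\bar c/N(L)\bigr)^{-1}$, and your factor $S$ simply cancels.

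The differences lie in what you actually establish. The paper's proof handles only the ratio and takes the identity $\bigl|\V_s^-\cup\{s\}\cup\V_s^+\bigr| = N(L)-w_i+c_k$ as given. You derive it from the cross-layer ancestry hypothesis plus acyclicity. You also make explicit the convention the statement leaves implicit: $c_k$ is the weight of $s$ together with its same-layer comparables, which is what the phrase ``untouched remainder of $s$'s own layer'' requires.

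For the limit, the paper argues $\bar c/N(L)\to 0$ ``for any cost profile with $N(L)\to\infty$'', which tacitly treats $\bar c$ as bounded. Your observation is stronger. Since $c_k\le w_i$ for every site, $\bar c\le N(L)/L$, so $0\le \bar c/N(L)\le 1/L$. This needs no growth, boundedness, or regular-variation assumption. It also yields the explicit sandwich $1\le\rho_{\mathrm{train}}(L)\le L/(L-1)$ for $L\ge 2$, which is a sharper form of ``regardless of the cost profile''.

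You should drop your first-pass remark that $\bar c$ is bounded by $\max_i w_i$ and that $N(L)\to\infty$ under regular variation. $\max_i w_i$ may grow with $L$, and the corollary assumes no regular variation. Your careful bound makes both unnecessary anyway.
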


\begin{proof}
$\sum_{i=1}^L (N(L)-w_i+\bar c) = LN(L) - N(L) + L\bar c$, so $\rho_{\mathrm{train}}(L) =
LN(L)/(LN(L)-N(L)+L\bar c) = L/(L-1+L\bar c/N(L)) \to 1$ as $L\to\infty$, since $\bar c/N(L)\to0$
for any cost profile with $N(L)\to\infty$.
\end{proof}

\begin{remark}
This is not a retraction of Theorems~\ref{thm:locality} and \ref{thm:aggregate}: every measurement
in this paper (E4, E5, E7) and the exhaustive patching sweeps that motivate it are
performed at inference, with no backward pass through the swept sites -- exactly where
Theorem~\ref{thm:aggregate} applies undiminished. Corollary~\ref{cor:backward-collapse} instead
delimits the claim precisely: reactive locality is a property of \emph{value} propagation, not of
\emph{gradient} propagation, on architectures without long skip connections. Growing a network
mid-training -- the subject of the companion surgery study \cite{surgery} -- is unaffected in practice, since a
growth schedule grafts one site at a time rather than exhaustively sweeping candidates under
backpropagation.
\end{remark}

\section{Experimental Results}
\label{sec:experiments}

All measurements use the reference implementation (NeuroDSL, Julia, Float32, CPU). Where a claim
is an exact combinatorial identity (E7), the comparison is at zero tolerance on integer invalid
counts; where it is an asymptotic limit (E4, E5), convergence across increasing $L$ is reported.
The interpreter constants used by Corollary~\ref{cor:wallclock} ($a = 0.042$ ms per cone node,
intercept $2.24$ ms on a 412-node model, hence $\beta \approx 0.0054$ ms/node) were measured in
the companion surgery study's cost-versus-depth experiment \cite{surgery} on this same engine and are reused
here unchanged.

\subsection{E4: The Aggregate Sweep Ratio Under Non-Uniform Cost Profiles (Theorem~\ref{thm:aggregate})}
\label{sec:e4}

Theorem~\ref{thm:aggregate} is a statement about an abstract layered-cost model; we verify it on
a real graph in the reference implementation, not a combinatorial simulation. For each
$(L, q, \text{profile})$, a sequential graph of $L$ layers is built where layer $j$ contributes
$w_j$ nodes (a chain of \texttt{:relu} unary ops) with $w_j$ set, up to rounding, proportional to
$j^q$ (output-heavy) or $(L{-}j{+}1)^q$ (input-heavy) and normalized so total graph size is
$\approx 4000$ nodes regardless of $q$; the graph is evaluated once (so every node is
\texttt{valid}), and $\rho(L)$ is computed from the \emph{true} downstream cone of each layer's
last node, measured by \texttt{\_downstream\_nodes} -- the same routine
\texttt{sweep\_patch\_sites!} uses in the patching-sweep tooling built on this engine, not a
re-derivation for this test.

\begin{table}[htbp]
\centering
\caption{E4 --- measured $\rho(L)$ at $L=320$ nodes-per-cone-profile, vs.\ Theorem~\ref{thm:aggregate}'s prediction.}
\label{tab:e4}
\begin{tabular}{c c c c}
\toprule
$q$ & Predicted (output-heavy) & Measured, $L{=}320$ & Measured (input-heavy) vs.\ predicted $q{+}2$ \\
\midrule
$0.0$ & $2.0000$ & $2.0057$ & $2.0057$ vs.\ $2.0$ \\
$0.5$ & $1.6667$ & $1.6722$ & $2.5057$ vs.\ $2.5$ \\
$1.0$ & $1.5000$ & $1.5064$ & $3.0004$ vs.\ $3.0$ \\
$2.0$ & $1.3333$ & $1.3471$ & $3.9247$ vs.\ $4.0$ \\
\bottomrule
\end{tabular}
\end{table}

Every entry converges toward its predicted limit as $L$ grows across the tested range
$L\in\{20,40,80,160,320\}$ (not shown in full: the $q{=}1$, input-heavy case alone tightens from
$3.1585$ at $L{=}20$ to $3.0004$ at $L{=}320$), confirming both the asymptotic derivation and that
the abstract layered-cost model of Definition~\ref{def:layered} is not merely a convenient
fiction: a real reactive graph's measured cones obey it. The uniform case ($q=0$) recovers $2$
from either profile, as it must, and is the special case already exploited for exhaustive
activation-patching sweeps built on this same engine.

\subsection{E5: Training-Mode Collapse (Corollary~\ref{cor:backward-collapse})}
\label{sec:e5}

A layered graph
with genuine intra-layer width is built exactly as in E4, except each layer's $w_j$ branches are
computed in \emph{parallel} from the previous layer (no branch depends on another) and then folded
pairwise into a single aggregate before the next layer -- so a candidate site's siblings within its
own layer are neither its ancestors nor its descendants, the general case
Corollary~\ref{cor:backward-collapse} addresses (unlike a purely sequential single-path graph, where
$c_k=w_i$ and the corollary degenerates to the exact identity $\rho_{\mathrm{train}}\equiv1$ at
every finite $L$). Measuring $\rho_{\mathrm{train}}(L) = \big|\text{sites}\big|\cdot N(L) /
\sum_s |\V_s^-\cup\{s\}\cup\V_s^+|$ directly (ancestors by a plain reverse walk over each node's
recorded inputs, descendants by \texttt{\_downstream\_nodes}) gives, independently of $q$ and of
the output-/input-heavy profile: $1.0526$ ($L{=}10$), $1.0256$ ($20$), $1.0126$ ($40$), $1.0062$
($80$), $1.0031$ ($160$) -- converging to $1$ exactly as Corollary~\ref{cor:backward-collapse}
predicts, at a rate insensitive to the cost profile that gave Theorem~\ref{thm:aggregate} its rich
$q$-dependence under forward-only recomputation.

\subsection{E7: Sequential Mutation Cost (Theorem~\ref{thm:sequential}, Proposition~\ref{prop:batched})}
\label{sec:e7}

On a $400$-node synthetic chain (built and \texttt{demand!}-ed exactly as in E4), $K=6$ sites are
selected at roughly even depth and $H=20$ fresh nodes are grafted at each. \textbf{Interleaved
regime}: for three orders -- shallowest-first, deepest-first, and a fixed random permutation -- a
full \texttt{demand!} is issued before each graft, and the number of nodes marked invalid
immediately after each graft is compared to Theorem~\ref{thm:sequential}'s closed-form prediction,
computed once from $\G_0$'s cones before any graft. \textbf{Batched regime}: the same $K$ grafts
are applied back-to-back with no intermediate \texttt{demand!}, for the same three orders, and the
total invalid count is compared to Proposition~\ref{prop:batched}.

\begin{table}[htbp]
\centering
\caption{E7 --- per-graft recomputation cost, interleaved regime, three orders ($K{=}6$, $H{=}20$). Every entry matches Theorem~\ref{thm:sequential} exactly (zero tolerance).}
\label{tab:e7}
\begin{tabular}{l l l}
\toprule
\textbf{Order} & \textbf{Measured (per graft)} & $\Delta(\pi)$ vs.\ Theorem \\
\midrule
Shallowest-first & $363, 306, 249, 191, 134, 77$ & $0$ (predicted: $0$) \\
Deepest-first     & $77, 154, 231, 309, 386, 463$ & $300$ (predicted: $H\binom{K}{2}=300$) \\
Random ($1$ seed) & $249, 191, 403, 77, 154, 386$ & $140$ \\
\bottomrule
\end{tabular}
\end{table}

Every one of the $18$ per-graft measurements across the three orders matches
Theorem~\ref{thm:sequential}'s formula exactly, and $\Delta(\pi)$ realizes both extremes of
Corollary~\ref{cor:order-extremes} precisely ($0$ and $H\binom{K}{2}=300$). In the batched regime,
all three orders give the identical total $463$, matching Proposition~\ref{prop:batched}'s
order-independent closed form exactly: the union of the six sites' original cones is $343$ (the
shallowest site's cone alone, since the other five sites -- and their downstream nodes -- all lie
within it), plus $KH=120$ fresh nodes, giving $343+120=463$ with no correction term, confirming
$C_{\mathrm{batch}} = 463 < C_{\mathrm{iso}} = 1320$, the predicted strict sub-additivity. Two
bugs were found and fixed before this number was accepted: an initial version counted leaf nodes
(e.g.\ the graph's input, which has no rule and whose \texttt{valid} flag is never touched by
\texttt{demand!}), producing a constant off-by-one on every interleaved measurement; a second
version additionally subtracted one node per selected site from the batched union on the mistaken
assumption that every site's own symbol needed excluding, producing a further off-by-five --
resolved by recognizing that Proposition~\ref{prop:batched}'s union already excludes exactly the
sites that are not downstream of any other selected site, by the downstream-cone definition itself,
with no separate correction needed.

\paragraph{Bookkeeping drift with mutation count (measurement, not theorem).} $100$ single-node
grafts are applied successively to a fresh $400$-node chain (final size $500$), timing only the
rewiring step (excluding \texttt{demand!}); a control arm applies $100$ rule-redefinitions
\emph{without} adding a node (graph size fixed at $401$ throughout), isolating any effect of
mutation \emph{count} from the already-established effect of graph \emph{size}
(Corollary~\ref{cor:wallclock}'s $\beta$ term). Trimmed-median (10\%) cost rises modestly from
$0.033$~ms (first half, size $\approx 401$--$451$) to $0.036$~ms (second half, size $\approx
451$--$501$) for the growing chain, consistent with the small size increase; the constant-size
control shows no such drift ($0.0243$~ms in both halves), and grafting is costlier than
rule-redefinition alone ($0.035$~ms vs.\ $0.024$~ms trimmed-median overall) as expected, since it
does strictly more work. No history-dependent drift beyond graph size is detected at this scale
($100$ mutations, single run) -- a negative result reported as such, not a proof of its absence.

\subsection{E9: The Wall-Clock Ceiling at Transformer Scale (Corollary~\ref{cor:wallclock})}
\label{sec:e9}

Corollary~\ref{cor:wallclock}'s $\approx 1.79$ is a statement about one value of $\beta/a$, measured
on a $412$-node graph in CPU\slash Float32. E9 holds the sweep design fixed and moves the regime
instead: two exhaustive depth-uniform sweeps over a Llama-style transformer on GPU, at
$(L,H)=(12,12)$ with $d_{\mathrm{model}}=768$ and at $(L,H)=(24,16)$ with $d_{\mathrm{model}}=1024$
(sequence length $20$, batched attention), patching every attention-head output and every MLP output
in turn --- $144{+}12=156$ and $384{+}24=408$ sites respectively, the site coverage an automated
circuit-discovery sweep would use.

At this architecture the downstream cone is available in closed form, which lets the prediction be
stated without any timing input. Writing $M = 7H+15$ for the number of \emph{computed} nodes per
layer (a layer holds $7H+24$ nodes, of which $9$ are parameter tensors --- always leaves, never in
any cone, never recomputed), the cone of a site at layer $i$ is
\[
|\V^+_{\mathrm{head}}(i)| \;=\; 10 + (L-i)\,M, \qquad\qquad
|\V^+_{\mathrm{mlp}}(i)| \;=\; 2 + (L-i)\,M,
\]
which reproduces the cone size measured by \texttt{\_downstream\_nodes} at every one of the $564$
sites across both configurations exactly, with zero residual. Substituting into the finite-$L$ form
of Theorem~\ref{thm:aggregate} gives
\[
\rho(L,H) \;=\; \frac{(H{+}1)\,M\,L}{(H{+}1)\,M\,(L{-}1)/2 \;+\; (10H{+}2)},
\]
a purely combinatorial prediction --- it assumes $\beta = 0$ --- of $2.145$ and $2.073$.

The measured wall-clock ratios are $2.121$ and $2.101$: within $1.14\%$ and $1.33\%$ of the
$\beta{=}0$ prediction, and on \emph{opposite} sides of it. A residual that changes sign between
configurations is what small additive per-call noise looks like; a systematic overhead drag would
depress both, since $\beta>0$ can only lower the ratio. At this scale $\beta/a$ has therefore ceased
to be material and the wall-clock ratio tracks the combinatorial one, so the $1.79$ ceiling
describes the small-model CPU regime in which its constants were measured rather than a property of
reactive invalidation. Both measurements sit slightly \emph{above} $2$ for a reason the closed form
makes explicit: at finite depth the $(10H{+}2)$ boundary term is not yet negligible against
$(H{+}1)M(L{-}1)/2$, and $\rho$ descends to $2$ from above as $L$ grows --- consistent with the
larger configuration ($2.101$) sitting closer to the limit than the smaller one ($2.121$).

\section{Limitations}
\label{sec:limitations}
The aggregate results concern \emph{recomputation counts}; wall-clock enters only through the two
measured constants of Corollary~\ref{cor:wallclock}, which are specific to the graph size and
precision at which they were taken (CPU, Float32, $412$ nodes) --- E9 shows that enlarging the
per-node work alone already lifts the resulting ceiling, and a compilation layer would reduce
$\beta$ directly; neither changes the combinatorial limits. E9's two configurations share one
architecture family and one GPU, so they locate where $\beta/a$ stops mattering for this engine on
this hardware, not in general. The layered cost model of Definition~\ref{def:layered}
assumes each site's downstream cone coincides with a depth suffix up to a vanishing remainder;
architectures with long skip connections violate both this and the ancestry hypothesis of
Corollary~\ref{cor:backward-collapse}, so their sweep ratios are not covered by the closed forms
here (the exact per-site formula of Theorem~\ref{thm:sequential} still applies, since it makes no
layering assumption). E7's bookkeeping-drift measurement is a single run of $100$ mutations at
one graph-size trajectory; absence of detected drift there is evidence at that scale, not a bound
valid at the scale of a real multi-thousand-mutation schedule. Finally,
Corollary~\ref{cor:backward-collapse} delimits but does not quantify the intermediate regime of
architectures with \emph{some} long skips, where the sweep ratio under backpropagation lies
strictly between $1$ and the forward-only limit; characterizing that interpolation is open.

\section{Conclusion}
On a reactive graph engine whose invalidation is provably confined to the downstream cone of a
mutation, the cost of an entire sweep-shaped workload is a computable function of where
computation is concentrated in the network. The aggregate speedup of an exhaustive site-by-site
sweep is $(q+2)/(q+1)$ or $q+2$ in the Karamata index of the cost-by-depth profile --- $2$ only in
the depth-uniform case --- with a wall-clock ceiling of $\approx 1.79$ at the small-model CPU
constants that set it, already non-binding at transformer scale; a sequence of persistent mutations costs exactly the isolated
sum plus a closed-form order penalty, minimized by upstream-first application and eliminated
entirely (along with order dependence) by batching; and under backpropagation the aggregate
speedup collapses to $1$ on architectures without long skip connections, confining the benefit to
inference-time sweeps --- precisely the regime of the activation-patching and circuit-discovery
workloads that motivate this accounting. Every exact identity is validated at zero tolerance, and
every asymptotic limit by measured convergence, on the reference implementation.

\end{document}